\def\eqref#1{equation~\ref{#1}}
\def\1{\bm{1}}
\def\vx{{\bm{x}}}
\DeclareMathAlphabet{\mathsfit}{\encodingdefault}{\sfdefault}{m}{sl}
\SetMathAlphabet{\mathsfit}{bold}{\encodingdefault}{\sfdefault}{bx}{n}
\newtheorem{theorem}{Theorem}
\newtheorem{definition}{Definition}
\newtheorem{remark}{Remark}
\newtheorem{corollary}{Corollary}
\newtheorem{observation}{Observation}
\DeclareMathOperator*{\regularization}{\ensuremath{{\theta}}}
\newcommand{\set}[1]{\mathcal{#1}}
\newcommand{\pnorm}[1]{\lVert{#1}\rVert}
\DeclareMathOperator*{\loss}{{\ell}}
\newcommand{\RN}{\mathbb{R}}
\newcommand{\x}{\ensuremath{\vx}}
\newcommand{\xorig}{\ensuremath{\vx_{\text{orig}}}}
\newcommand{\setX}{\ensuremath{\set{X}}}
\newcommand{\setY}{\ensuremath{\set{Y}}}
\newcommand{\xcf}{\ensuremath{\vx_{\text{cf}}}}
\newcommand{\xsf}{\ensuremath{\vx_{\text{sf}}}}
\newcommand{\ycf}{\ensuremath{y_{\text{cf}}}}
\newcommand{\deltacf}{\ensuremath{\mathbf{{\delta}_\text{cf}}}}
\newcommand{\dimsym}{d}
\newcommand{\classifier}{\ensuremath{h}}
\newcommand{\refeq}[1]{Eq.~\ref{#1}}
\newcommand{\refdef}[1]{Definition~\ref{#1}}
\newcommand{\newcontent}[1]{{#1}}
\pgfplotsset{width=10cm,compat=1.9}  
\tikzset{
  my node style/.style={
    rectangle,
    rounded corners,
    minimum size=6mm,
    very thick,
    align=center,
  }
}
\title{On the Hardness of Computing Counterfactual and Semi-factual Explanations in XAI}
\author{\name Andr\'e Artelt \email aartelt@techfak.uni-bielefeld.de \\
      \addr Faculty of Technology\\
      Bielefeld University, Germany
      \AND
      \name Martin Olsen \email martino@btech.au.dk \\
      \addr Department of Business Development and Technology \\
      Aarhus University, Denmark
      \AND
      \name Kevin Tierney \email kevin.tierney@univie.ac.at \\
      \addr Department of Business Decisions and Analytics \\
      University of Vienna, Austria}
\begin{document}

\maketitle

\begin{abstract}
Providing clear explanations to the choices of machine learning models is essential for these models to be deployed in crucial applications. Counterfactual and semi-factual explanations have emerged as two mechanisms for providing users with insights into the outputs of their models. We provide an overview of the computational complexity results in the literature for generating these explanations, finding that in many cases, generating explanations is computationally hard. We strengthen the argument for this considerably by further contributing our own inapproximability results showing that not only are explanations often hard to generate, but under certain assumptions, they are also hard to approximate. We discuss the implications of these complexity results for the XAI community and for policymakers seeking to regulate explanations in AI.
\end{abstract}

\section{Introduction} 
As machine learning (ML) models are increasingly used in critical applications, the ability to provide explanations about their outputs becomes crucially important. The EU AI Act~\citep{eu_ai_act_2024} adds a regulatory motivation to enabling the explanation of models, as it ``gives those subject to an algorithmic decision the right to an explanation of that decision''~\citep{nisevic2024explainable}. The field of explainable artificial intelligence (XAI) has emerged to address these challenges~\citep{samek2019towards,minh2022explainable}. 

However, despite the focus on providing explanations both in the research community and regulatory bodies, the computational complexity of explanations has only just begun to be explored. This raises the central question of this work, namely, under what conditions can we obtain explanations and in what computational complexity? The answer to this question has an impact beyond the research domain, potentially affecting the feasibility of future laws and regulation concerning explainability in AI~\citep{CounterfactualReviewChallenges}.

One key focus of the XAI community is on \emph{counterfactual} explanations~\citep{guidotti2022counterfactual}, which describe how changing the input features to an ML model changes the output prediction of this model. An alternative to counterfactual explanations are \emph{semi-factual} explanations~\citep{aryal2024semi}, which highlight changes in input features that \emph{do not} lead to changes in the model output. Both of these types of explanations allow users of models to understand how changing input affects their model, providing increased transparency, trust and potentially better decision making. 

The computational complexity of generating counterfactual and semi-factual explanations varies greatly depending on the ML model and type of explanation desired, and in many cases is unknown. The literature has recently made great strides in defining various cases of explanations and providing proofs of computational complexity. However, the literature is split between different communities, using different terminology and different formalizations, making it challenging to get a comprehensive overview. In particular, there is no unifying framework to unify the many directions of research to give a full view of the current state of the complexity of explanations. 

This paper aims to resolve this and provides the following contribution.
\begin{enumerate}
    \item We provide a comprehensive and accessible overview of the literature analyzing the computational complexity of counterfactual and semi-factual explanations.
    \item We study the existence of approximation algorithms in several explanation settings, further refining the existing knowledge of the complexity of explanations.
    \item We provide a critical discussion of our findings and what they mean for the XAI community.
\end{enumerate}

This paper is organized as follows. We first provide some background about counterfactual explanations and foundations in computational complexity in Section~\ref{sec:foundations}. Next, we provide a framework of the complexity of explanations for various types of ML models, detailing both what we currently know and open challenges in Section~\ref{sec:complexity}. In Section~\ref{sec:complexity:cf:approx} we provide novel approximation results for some explanation settings, whereby all proofs can be found in Appendix~\ref {appendix:approx:theorems}. Finally, in Section~\ref{sec:discussion} we discuss open questions and implications of computational complexity analysis for the XAI community.




\section{Background \& Foundations}\label{sec:foundations}
\subsection{Counterfactual \& Semi-factual Explanations}\label{sec:foundations:cfs}
A counterfactual explanation (CFE), often referred to simply as a \textit{counterfactual}, outlines the specific changes that can be made to the features of a particular instance to alter the system's output. Typically, such explanations are sought when the outcome is unexpected or unfavorable~\citep{riveiro2022challenges}. In the latter case, a counterfactual is also referred to as \textit{(computational) recourse}~\citep{karimi2021survey}, i.e.\ recommendations on how to change the unfavorable into a favorable outcome.
Because counterfactuals can mimic ways in which humans explain~\citep{CounterfactualsHumanReasoning}, they constitute one of the most popular explanation methods in literature and in practice~\citep{molnar2019,CounterfactualReviewChallenges}.

Counterfactual explanations~\citep{CounterfactualWachter} have two key characteristics: 1) the contrasting property, which necessitates a change in the system's output, and 2) the cost of the counterfactual, meaning the effort and resources required to implement the counterfactual in the real world should be minimized to enhance its practicality. This often involves making as few changes as possible or ensuring that the changes are minimal.
Both properties can be combined into an optimization problem (see \refdef{def:counterfactual}).
\begin{definition}[(Classic) Counterfactual Explanation]\label{def:counterfactual}
Assume a classifier (binary or multi-class) $\classifier:\setX \to \setY$ is given. Computing a counterfactual $\deltacf$ for a given instance $\xorig \in \setX$ is phrased as the following optimization problem:
\begin{equation}\label{eq:counterfactualoptproblem}
\underset{\deltacf}{\arg\min}\; \regularization(\deltacf) \quad\text{s.t. } \classifier(\xorig \oplus \deltacf) = \ycf
\end{equation}
where $\regularization(\cdot)$ denotes the cost of the explanation (e.g.,\ cost of recourse) that should be minimized.
We refer to the final counterfactual sample $\xorig \oplus \deltacf$ as $\xcf$.
\end{definition}
\begin{remark}\label{remark:wachter}
An alternative to the constrained optimization problem~\refeq{eq:counterfactualoptproblem} is to use a single objective weighting the contrastive and cost properties as proposed by~\cite{CounterfactualWachter}:
\begin{equation}\label{eq:counterfactualoptproblem:wachter}
    \underset{\deltacf}{\arg\min}\; \loss(\classifier(\xorig \oplus \deltacf), \ycf) + \lambda \cdot \regularization(\deltacf)
\end{equation}
where $\lambda>0$ denotes a regularization parameter that balances the two objectives, and $\loss(\cdot)$ denotes a suitable loss function such as the zero-one loss or the squared error in the case that $\classifier(\cdot)$ denotes a regressor instead of a classifier.
\end{remark}
To not make any assumptions on the data domain, we use the symbol $\oplus$ to denote the application/execution of the counterfactual $\deltacf$ to the original instance $\xorig$. While in the case of real and integer numbers (e.g., $\setX=\RN^\dimsym$) this reduces to 
a translation, i.e., $(\xcf)_i = (\xorig)_i + {(\deltacf)}_i$, in the case of categorical features it denotes a substitution, i.e. $(\xcf)_i = {(\deltacf)}_i$.

Note that the cost of the counterfactual (often also referred to as the \textit{cost of recourse}), here modeled by $\regularization(\cdot)$, is highly domain and use-case-specific. It therefore must be chosen carefully in practice, and might require domain knowledge. In many implementations and toolboxes~\citep{guidotti2022counterfactual}, the $p$-norm is used as a default.

Besides those two essential properties (contrasting and cost),  additional relevant aspects exist such as plausibility~\citep{PlausibleCounterfactuals,CounterfactualGuidedByPrototypes,poyiadzi2020face}, diversity~\citep{mothilal2020explaining}, robustness (w.r.t. to input perturbations or model changes)~\citep{artelt2021evaluating,zhang2023density,leofante2023promoting,DBLP:conf/ecai/MarzariLCF24}, and fairness~\citep{artelt2023ijcai,von2022fairness,sharma2021fair,sharma2020certifai}, which have been addressed in literature~\citep{guidotti2022counterfactual}.
\newcontent{In particular, robustness with respect to model changes has recently received significant attention~\citep{DBLP:conf/ecai/MarzariLCF24,Leofante_Wicker_2025}. Here, a set  $\bigtriangleup\set{H}(\classifier)$ is assumed that contains all plausible variations/changes of a given classifier $\classifier(\cdot)$. We can then extend~\refeq{eq:counterfactualoptproblem} for computing a counterfactual $\deltacf$ that is robust under those model changes $\bigtriangleup\set{H}(\classifier)$ as follows~\citep{DBLP:conf/ecai/MarzariLCF24}:
\begin{equation}\label{eq:counterfactualoptproblem_robust}
\underset{\deltacf}{\arg\min}\; \regularization(\deltacf) \quad\text{s.t. } \classifier'(\xorig \oplus \deltacf) = \ycf \quad \forall \classifier' \in \bigtriangleup\set{H}(\classifier)
\end{equation}
}
However, it is worth noting that the basic formalization in~\refeq{eq:counterfactualoptproblem} is still very popular and widely used in practice~\citep{CounterfactualReviewChallenges,guidotti2022counterfactual}.

\newcontent{There exist numerous methods (mostly heuristics)~\citep{guidotti2022counterfactual} for computing counterfactual explanations, i.e., computing feasible solutions to~\refeq{eq:counterfactualoptproblem}.
Gradient-based methods for solving~\refeq{eq:counterfactualoptproblem:wachter}~\citep{CounterfactualWachter}, assuming a differentiable model, are extremely popular in the literature~\citep{guidotti2022counterfactual}. For non-differentiable models, such as tree-based models, evolutionary methods~\citep{mothilal2020explaining,Dandl_Molnar_Binder_Bischl_2020} are commonly used. Furthermore, evolutionary methods also constitute the most popular choice in the case of categorical features, which often occur in real-world scenarios such as attrition or business analytics~\citep{ArteltG23,artelt2024supporting}.
However, those existing methods usually compute a feasible solution only and are unable to guarantee optimality. In this context, a study of the computational complexity of the corresponding optimization problems being solved can provide insight into the cases in which it is (or is not) possible to (efficiently) compute optimal or approximately optimal counterfactual explanations.}

It is important to note that \refdef{def:counterfactual} represents a non-causal approach, meaning it does not involve modeling underlying causal mechanisms. A separate branch of research on counterfactuals exists that uses structural causal models to integrate causal knowledge~\citep{karimi2020model}. However, in practice, such causal models are often unknown and must be estimated from data or precisely defined with the assistance of domain experts.
This work focuses solely on this non-causal approach 
because all of the literature on the computational complexity of counterfactual explanations currently only considers the non-causal case. 

As an alternative to~\refdef{def:counterfactual}, counterfactual explanations are also related to the \emph{Minimum Change Required (MCR)} considered in the computational complexity literature~\citep{NEURIPS2020_b1adda14}. Here, the existence of a counterfactual $\deltacf$ with an upper bound $k$ on its cost $\regularization(\deltacf)$ is stated as a \emph{decision problem}:
\begin{equation}\label{eq:cf:mcr}
\text{Does there exist a $\deltacf$ s.t. } \classifier(\xorig 
\oplus \deltacf) = \ycf \text{ and } \regularization(\deltacf) \leq k\text{?}
\end{equation}
Note that if the decision problem~\refeq{eq:cf:mcr} is computationally hard, so is the corresponding optimization problem~\refeq{eq:counterfactualoptproblem}~\citep{NEURIPS2020_b1adda14}.

\newcontent{In the context of plausible counterfactual explanations, \cite{amir2024hard} assumes the availability of an additional function (called \emph{context indicator}) $\pi: \xorig 
\oplus \deltacf\mapsto \{0, 1\}$ for distinguishing between plausible ($\pi(\cdot)=1$) and non-plausible ($\pi(\cdot)=0$) counterfactuals. This context indicator $\pi(\cdot)$ is the added as an additional constraint to~\refeq{eq:cf:mcr} to ensure plausible counterfactual explanations:
\begin{equation}\label{eq:cf:mcr_plausible}
\text{Does there exist a $\deltacf$ s.t. } \classifier(\xorig 
\oplus \deltacf) = \ycf \text{ and } \pi(\xorig 
\oplus \deltacf) = 1 \text{ and } \regularization(\deltacf) \leq k\text{?}
\end{equation}
However, it is worth noting that apart from the plausibility formulation~\refeq{eq:cf:mcr_plausible} studied in~\citep{amir2024hard}, there exist numerous other plausibility formulations, such as causal plausibility~\citep{karimi2021survey} that have not been considered so far when analyzing the computational complexity of computing counterfactual explanations.
}

\subsubsection{Semi-factual Explanations}
Semi-factual explanations (often just called \emph{semi-factuals})~\citep{aryal2023even,aryal2024semi,kenny2023utility}, also known as ``even if'' explanations, highlight input changes that, unlike counterfactuals, \emph{do not alter the outcome} -- i.e., ``Even if X and Y had been different, the outcome would remain unchanged''.  These explanations provide a rationale for counterfactuals by illustrating which changes would not affect the outcome, helping users understand what modifications would leave the outcome intact.
Compared to counterfactuals, semi-factuals have received much less attention in the XAI community~\citep{aryal2023even}. The existing research on semi-factuals in XAI explores the explanation of reject options~\citep{artelt2022even}, computational approaches for plausible semi-factual and counterfactuals~\citep{kenny2021generating}, as well as connecting semi-factuals and counterfactuals on a computational level~\citep{aryal2024even}.

Unlike counterfactual explanations, there is no universally accepted formalization for semi-factual explanations. Formalizing semi-factual explanations requires balancing two conflicting objectives~\citep{aryal2024even,aryal2023even}: 1) Maximizing the changes made to the inputs without altering the outcome, and 2) Minimizing the number of changes to the inputs to facilitate easier human understanding.
While some approaches look for sparse changes such that the distance to the decision boundary stays the same or becomes smaller (but not larger)~\citep{artelt2022even,kenny2021generating}, another modeling approach refers to semi-factuals as \emph{minimum sufficient reasons (MSR)}~\citep{NEURIPS2020_b1adda14,ignatiev2021sat}, also called \emph{Prime Implicant explanations}, aiming to identify the smallest subset of features that alone is sufficient for the observed outcome, i.e.,\ ignoring the values the other features take, the outcome will always be unchanged:
\begin{equation}\label{eq:semifactual:msr}
    \text{Does there exist an $\xsf$ s.t. } \classifier(\xsf) = \classifier(\xorig) \text{ and } \pnorm{\xsf}_0 \leq k \text{ and } (\xsf)_i = (\xorig)_i \;\forall i:\, (\xsf)_i \neq \perp
\end{equation}
where $\perp$ denotes the default (turned off) value of a feature.
\newcontent{Like in the case of counterfactual explanations~\refeq{eq:cf:mcr_plausible}, \cite{amir2024hard} extends~\refeq{eq:semifactual:msr} by an additional context indicator $\pi: \xsf \mapsto \{0, 1\}$ for distinguishing between plausible ($\pi(\cdot)=1$) and non-plausible ($\pi(\cdot)=0$) semi-factual explanations:
\begin{equation}\label{eq:semifactual:msr_plausible}
\begin{split}
    \text{Does there exist an $\xsf$ s.t. }& \classifier(\xsf) = \classifier(\xorig)\\
    &\pi(\xsf)=1 \\
    &\pnorm{\xsf}_0 \leq k\\
    &(\xsf)_i = (\xorig)_i \;\forall i:\, (\xsf)_i \neq \perp
\end{split}
\end{equation}
}

Alternatively, \citet{alfano2024even} completely ignores the second objective and just maximizes the change such that the outcome remains unchanged:
\begin{equation}\label{eq:semifactual:mca}
    \text{Does there exist an $\xsf$ s.t. } \classifier(\xsf) = \classifier(\xorig) \text{ and } \regularization(\xorig, \xsf) \geq k
\end{equation}
where we abuse the previous notation and assume that $\regularization(\cdot, \cdot)$ measures the distance/similarity of two given instances.
Note that~\refeq{eq:semifactual:mca} assumes \emph{binary/categorical} features in order to be meaningful~\citep{alfano2024even} and is also referred to as the \emph{Maximum Change Allowed (MCA)}.
Further note that only~\refeq{eq:semifactual:msr} and~\refeq{eq:semifactual:mca} have been considered when studying the computational complexity of semi-factual explanations.

\subsection{Computational Complexity}\label{sec:foundations:complexity}
In the following, we briefly introduce computational complexity and refer the interested reader to \citep{10.5555/1540612} for more details and formal definitions. There are different types of problems, and we will start by considering so-called {\em decision problems} where the objective is to compute a ``yes'' or ``no'' answer given some question on some input. As an example, Eq.~\ref{eq:cf:mcr} shows a formal way of stating the MCR problem as a decision problem in which the informal version of the related counterfactual question is as follows: Is it possible to change the class outputted by a classifier by applying a small change to the input?

Regarding the relation of two (decision) problems, we say that a problem A is {\em reducible} to a problem B if we can solve problem A using a subroutine for solving B without spending too much time translating A into B. In this case, we will write $A \leq B$ since B, in a sense, is at least as hard to compute as A.

Computational complexity classes state how difficult it is to solve a given (decision) problem. In this context, we say that an algorithm (for solving a given problem) is a polynomial time algorithm if the execution time is polynomial in the size of the input.
Some of the fundamental complexity classes for decision problems, as relevant for this work, are the following:
\begin{itemize}
    \item P (or PTIME): The class P contains all decision problems where there exist a polynomial time algorithm for computing the solution.
    \item NP, coNP and D$^p$: Informally speaking, a decision problem is in NP if there is an efficient procedure to prove that the answer is ``yes'' whenever this is the case. It is obvious that $\mbox{P}\subseteq\mbox{NP}$. It is an open problem whether $\mbox{P}\neq\mbox{NP}$, but the general belief is that this is true~\citep{10.5555/1540612}. A problem B is said to be NP-hard if $A \leq B$  holds for any problem A in NP. The relation $\leq$ is transitive, so we can show that a problem C is NP-hard by showing $B \leq C$ for a problem B known to be NP-hard. If we can show that a problem is NP-hard then it is unlikely that there is a polynomial time algorithm for solving it since this would imply $\mbox{P}=\mbox{NP}$. It is important to stress that a problem does not need to be a decision problem to be NP-hard. If a problem is NP-hard and a member of NP, we refer to the problem as being NP-complete. The coNP class contains the decision problems where ``no'' answers are efficiently verifiable. A decision problem is a member of the complexity class D$^p$ if the answer for some input is ``yes'' if and only if the answer is ``yes'' for some problem A and ``yes'' for some problem B on the same input where A and B are members of NP and coNP, respectively. 
    \item $\Sigma_2^p$: Decision problems in $\Sigma_2^p$ also allow a formal procedure for proving any ``yes'' decision, but proofs may take superpolynomial time. Analogously to NP, a problem in $\Sigma_2^p$ can be $\Sigma_2^p$-complete, showing that the problem is at least as hard as any other problem in $\Sigma_2^p$. It is often conjectured that NP is a proper subset of $\Sigma_2^p$.
\end{itemize}

A second type of problems are {\em counting problems}, where the related question asks how many objects there are of a certain kind. As an example, let us revisit the MCR problem~\refeq{eq:cf:mcr} for a classifier taking binary input. A counting version of MCR could be formulated as follows: How many inputs within Hamming distance $5$ from the actual input can change the output of the classifier?
Note that this problem is at least as hard as deciding if such an input exists. The class \#P for counting problems is the analogous class to NP for decision problems, and it is unlikely that there is a polynomial time algorithm for solving a problem that is \#P-complete since this would also imply \mbox{P} = \mbox{NP}. Occasionally, the problem of {\em enumerating} objects of a certain kind is considered -- for example, explanations for ML models -- instead of counting them.

The third and final type of problems that we consider in this work are {\em optimization problems}, where we focus on the subclass of {\em minimization problems}. Let us again consider the MCR problem~\refeq{eq:cf:mcr}, where a minimization version could be stated as follows: What is the closest input to the actual input for which the output of the classifier will change?
Here we are trying to apply a minimum change to the input to change the output. In many cases, it is hard to find an exact solution to minimization problems, so we are looking for approximate solutions instead. An algorithm is a $c$-{\em approximation algorithm} for a minimization problem if the algorithm is able to compute a solution with a value of the objective that is not bigger than $c$ times the optimal value. With respect to the MCR example, this means that the change the algorithm recommends is within a factor $c$ from the minimum change required. A common technique (also applied in this work) for showing that a polynomial-time algorithm for computing an approximate solution is unlikely is to show that the problem is NP-hard. As noted above, the existence of such an algorithm would then imply \mbox{P} = \mbox{NP}, which is believed to be false.
\begin{figure}
\centering
\rotatebox[origin=c]{90}{
\begin{forest}
  my tree style
  [\textit{Classifier Type}
    [Any Monotonic\\{\tiny\cite{marques2021explanations}}]
    [\textit{Single}
        [kNN\\{\tiny\cite{barcelo2025explaining}}]
        [\textit{Trees}
            [FBDD\\{\tiny\cite{NEURIPS2020_b1adda14}}\\{\tiny\cite{amir2024hard}}\\{\tiny\cite{alfano2024even}}]
            [DT\\{\tiny\cite{DBLP:journals/corr/abs-2010-11034}}]
            [DL\\{\tiny\cite{ignatiev2021sat}}]
        ]
        [GNN\\{\tiny\cite{kosan2024gcfexplainer}}\\{\tiny\cite{verma2024induce}}]
        [Perceptron\\{\tiny\cite{NEURIPS2020_b1adda14}}\\{\tiny\cite{amir2024hard}}\\{\tiny\cite{alfano2024even}}]
        [ReLU Net\\{\tiny\cite{NEURIPS2020_b1adda14}}\\{\tiny\cite{amir2024hard}}\\{\tiny\cite{DBLP:conf/ecai/MarzariLCF24}}\\{\tiny\cite{alfano2024even}}\\{\tiny\cite{weng18a}}]
    ]
    [\textit{Ensembles}
        [Trees
            [Additive Trees\\{\tiny\cite{Cui2015OptimalAE}}]
            [Random Forest\\{\tiny\cite{izza2021explaining}}]
            [FBDD\\{\tiny\cite{bassan2025on}}]
        ]
        [Perceptron\\{\tiny\cite{bassan2025on}}]
        [ReLU Net\\{\tiny\cite{bassan2025on}}]
    ]
  ]
\end{forest}
}
\caption{Overview of the classifier types for which the computational complexity of counterfactuals and/or semi-factuals has been studied in the literature.}
\label{fig:classifiers}
\end{figure}
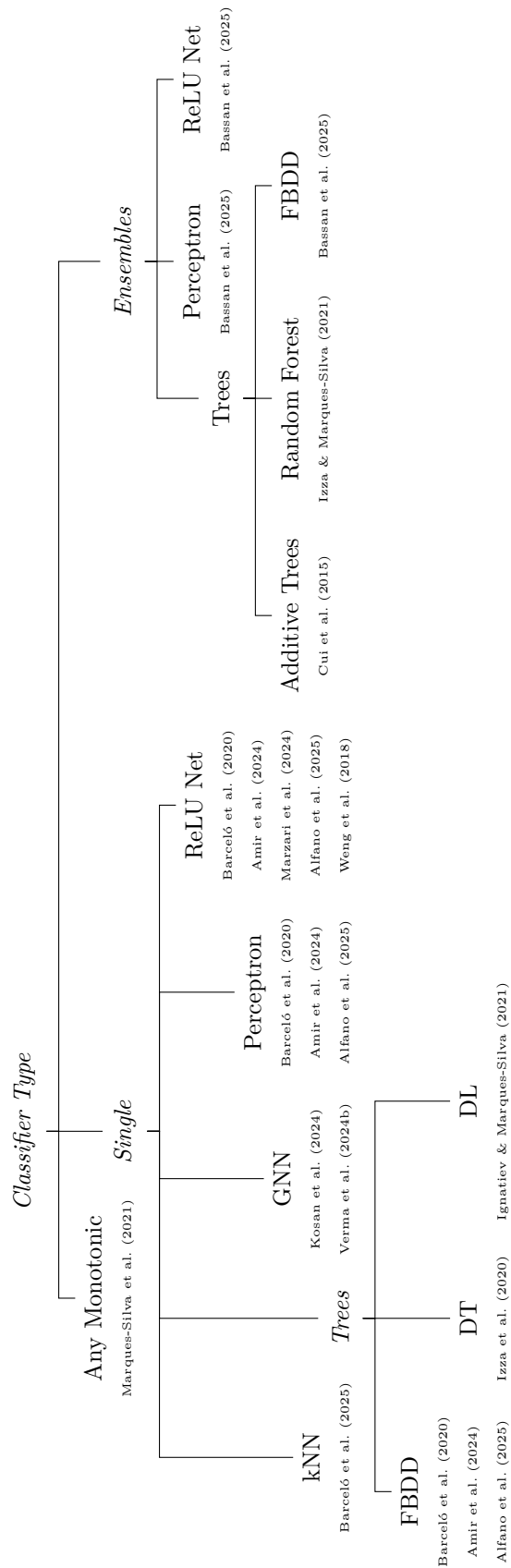

\subsection{Specific models considered in literature}
Besides ``classic'' machine learning models such as decision trees, tree ensembles, multi-layer perceptrons (MLPs) (e.g., ReLU networks, i.e., MLPs with ReLU activation functions), perceptrons, k-nearest neighbors (kNN), etc., the literature on computational complexity of counterfactuals also considers some ``less popular'' (but more general) models such as free binary decision diagrams and decision lists for Boolean functions.
A free binary decision diagram (\emph{FBDD}) is a rooted directed acyclic graph encoding a boolean decision function where the non-leaf nodes correspond to the binary variables and leaf nodes encode the decision output~\citep{NEURIPS2020_b1adda14}. Note that decision trees (\emph{DT}) are special instances of FBDDs where the graph is a tree.
A Decision List (\emph{DL}) constitutes an ordered list of IF-THEN rules whereby the condition statement is written as a logical conjunction over different features~\citep{ignatiev2021sat}.

\section{Overview of Computational Complexity Results}\label{sec:complexity}
Existing work on the computational complexity of counterfactual and semi-factual explanations has studied a variety of different classifiers with a special focus on perceptrons, ReLU networks, and FBDDs -- see Figure~\ref{fig:classifiers} for an overview.

\subsection{Counterfactual Explanations}\label{sec:complexity:cf}
Most research on the computational complexity of counterfactuals considers classic counterfactuals (\refdef{def:counterfactual}) without any additional constraints on plausibility, robustness, etc. Only very little work exists on robustness (w.r.t. model changes), plausibility, or global counterfactuals.
We provide a summary of the complexity results, including two (straightforward) implications from the literature (Corollaries~\ref{corollary:ensemble:relunet:robust} and~\ref{corollary:ensemble:relunet:plausible}), in Table~\ref{tab:counterfactuals}. Note that almost all existing work assumes discrete (i.e., binary) features, although some do not make any assumptions.
\begin{table}
    \tabcolsep=0.11cm
    \centering
    \begin{adjustbox}{max width=\textwidth}
        \begin{tabular}{cccccccc}
            \hline
             & & & \multicolumn{5}{c}{\textit{Types of Counterfactuals}} \\
             \multicolumn{3}{c}{Classifiers} & Classic (Single) & Classic (Enumerate) & \newcontent{Robust{\tiny\refeq{eq:counterfactualoptproblem_robust}}} & \newcontent{Plausible{\tiny\refeq{eq:cf:mcr_plausible}}}\tablefootnote{\newcontent{The stated complexity classes in this table assume the worst-case of a ReLU network for implementing the context indicator $\pi(\cdot)$ -- complexities for other functions can be found in~\cite{amir2024hard}.}} & Global \\  
             \hline
             \multirow{14}{*}{\rotatebox[origin=c]{90}{Single}}
             & \multicolumn{2}{c}{Any} & - & \makecell{NP-hard\\\tiny\cite{tsirtsis2020decisions}} & - & - & - \\ 
             & \multicolumn{2}{c}{Monotonic} & \makecell{PTIME\\\tiny\cite{marques2021explanations}} & \makecell{NP-complete\\\tiny\cite{marques2021explanations}}  & - & - & - \\ 
             & \multicolumn{2}{c}{kNN\tablefootnote{For $k\geq 1$ and the $l_1$ norm -- for the $l_2$-norm the complexity is PTIME~\citep{barcelo2025explaining}. For a discrete (binary) domain it is always NP-complete~\citep{barcelo2025explaining}}} &  \makecell{NP-complete\\\tiny\cite{barcelo2025explaining}} & - & - & - & - \\
             & \multicolumn{2}{c}{GNN} &  \makecell{NP-complete\\\tiny\cite{verma2024induce}} & - & - & - & \makecell{NP-hard\\\tiny\cite{kosan2024gcfexplainer}} \\ 
             & \multicolumn{2}{c}{Perceptron} & \makecell{PTIME\\\tiny\cite{NEURIPS2020_b1adda14}} & - & - & \makecell{NP-complete\\\tiny\cite{amir2024hard}} & - \\ 
             & \multicolumn{2}{c}{ReLU Net} & \makecell{NP-complete\\\tiny\cite{NEURIPS2020_b1adda14}} & - & \makecell{NP-hard\\\tiny\cite{DBLP:conf/ecai/MarzariLCF24}} & \makecell{NP-complete\\\tiny\cite{amir2024hard}} & - \\ \cline{2-8}

             & \multirow{3}{*}{\rotatebox[origin=c]{90}{Trees}}
             & DT & \makecell{PTIME\\\tiny\cite{huang2021efficiently}} & \makecell{PTIME\\\tiny\cite{huang2021efficiently}} & - & - & - \\ 
             & & FBDD & \makecell{PTIME\\\tiny\cite{NEURIPS2020_b1adda14}} & - & - & \makecell{NP-complete\\\tiny\cite{amir2024hard}} & - \\
             \bottomrule

             \multirow{6}{*}{\rotatebox[origin=c]{90}{Ensembles}}
             & \multicolumn{2}{c}{Perceptron} & \makecell{NP-complete\\\tiny\cite{bassan2025on}} & - & - & - & - \\ 
             & \multicolumn{2}{c}{ReLU Net} & \makecell{NP-complete\\\tiny\cite{bassan2025on}} & - & \makecell{NP-hard\\\tiny Corollary~\ref{corollary:ensemble:relunet:robust}} & \makecell{NP-complete\\\tiny Corollary~\ref{corollary:ensemble:relunet:plausible}} & - \\ \cline{2-8}

             & \multirow{4}{*}{\rotatebox[origin=c]{90}{Trees}}
             & Additive Trees & \makecell{NP-hard\\\tiny\cite{Cui2015OptimalAE}} & - & - & - & - \\ 
             & & FBDD & \makecell{NP-complete\\\tiny\cite{bassan2025on}} & - & - & - & - \\
             \bottomrule
        \end{tabular}
    \end{adjustbox}
    \caption{Computational complexity of counterfactual explanations.}
    \label{tab:counterfactuals}
\end{table}

From Table~\ref{tab:counterfactuals} is becomes apparent that the computation of counterfactuals for ensembles of models is hard, and also for most single models, except monotonic classifiers and decision trees \& diagrams.
The effect of many additional properties, such as \newcontent{robustness to model changes} and plausibility, on the computational complexity remains unknown. However, it seems to be the case that adding plausibility constraints also makes the computation hard.
Observing those ``negative'' results raises the question of implications for the XAI community. In particular, how reasonable it is to aim for optimal explanations -- we elaborate on this further in Section~\ref{sec:discussion}.

\subsection{Semi-factual Explanations}\label{sec:complexity:sf}
Similar to the limited work on semi-factual explanations in general, existing work on the computational complexity of semi-factuals is also limited compared to counterfactuals (see Table~\ref{tab:counterfactuals}).
\citet{alfano2024even} explicitly studies the computational complexity of semi-factuals for a few classifiers (following the maximum change modeling approach~\refeq{eq:semifactual:mca}). However, most of the existing work~\citep{bassan2025on,marques2021explanations,ignatiev2021sat,NEURIPS2020_b1adda14} on the computational complexity of semi-factuals adopts the minimum sufficient reasons modeling approach~\refeq{eq:semifactual:msr}, and states some complexity results, often as a byproduct of their study of counterfactual explanations.
We provide a summary of the complexity results in Table~\ref{tab:semifactuals}.
\begin{table}
    \tabcolsep=0.11cm
    \centering
    \begin{adjustbox}{max width=\textwidth}
        \begin{tabular}{ccccccc}
            \hline
             & & & \multicolumn{3}{c}{\textit{Modeling of Semi-factuals}} \\
             & & & \multicolumn{2}{c}{Minimum Sufficient Reason {\tiny(\refeq{eq:semifactual:msr})}} & Maximum Change Allowed{\tiny(\refeq{eq:semifactual:mca})} \\ 
             & \multicolumn{2}{c}{\textit{Classifiers}} & Classic & \newcontent{Plausible {\tiny\refeq{eq:semifactual:msr_plausible}}\tablefootnote{The stated complexity classes in this table assume the worst-case of a ReLU network for implementing the context indicator $\pi(\cdot)$; complexities for other functions can be found in~\cite{amir2024hard}.}} & Classic \\
             \hline
             \multirow{12}{*}{\rotatebox[origin=c]{90}{Single}}
             & \multicolumn{2}{c}{Monotonic} & \makecell{PTIME\\\tiny\cite{marques2021explanations}} & -  & - \\ 
            & \multicolumn{2}{c}{kNN\tablefootnote{For $k\geq1$ and continuous or discrete (binary) domain -- not matter which norm is used.}} & \makecell{NP-hard\\\tiny\cite{barcelo2025explaining}} & -  & - \\ 
             & \multicolumn{2}{c}{Perceptron} & \makecell{PTIME\\\tiny\cite{NEURIPS2020_b1adda14}} & \makecell{$\Sigma_2^p$-complete\\\tiny\cite{amir2024hard}} &\makecell{PTIME\\\tiny\cite{alfano2024even}} \\ 
             & \multicolumn{2}{c}{ReLU network} & \makecell{$\sum_2^p$-complete\\\tiny\cite{NEURIPS2020_b1adda14}} & \makecell{$\Sigma_2^p$-complete\\\tiny\cite{amir2024hard}} & \makecell{NP-complete\\\tiny\cite{alfano2024even}} \\ 
             & \multicolumn{2}{c}{Extended Linear} &\makecell{PTIME\\\tiny\cite{DBLP:conf/nips/0001GCIN20}} & - & - \\ 
             \cline{2-6}
             
             & \multirow{5}{*}{\rotatebox[origin=c]{90}{Trees}}
             & DT & \makecell{PTIME\\\tiny\cite{DBLP:journals/corr/abs-2010-11034}} & - & - \\
             & & DL & \makecell{NP-hard\\\tiny\cite{ignatiev2021sat}} & - & - \\
             & & FBDD & \makecell{NP-complete\\\tiny\cite{NEURIPS2020_b1adda14}} & \makecell{$\Sigma_2^p$-complete\\\tiny\cite{amir2024hard}} &\makecell{PTIME\\\tiny\cite{alfano2024even}} \\
             \bottomrule

             \multirow{7}{*}{\rotatebox[origin=c]{90}{Ensembles}}
             & \multicolumn{2}{c}{Perceptron} & \makecell{$\sum_2^p$-complete\\\tiny\cite{bassan2025on}}  & - & - \\ 
             & \multicolumn{2}{c}{ReLU network} & \makecell{$\sum_2^p$-complete\\\tiny\cite{bassan2025on}} & - & - \\ 
             & \multicolumn{2}{c}{FBDD} & \makecell{$\sum_2^p$-complete\\\tiny\cite{bassan2025on}} & - & - \\
             & \multicolumn{2}{c}{Random Forest} & \makecell{$D^p$-complete\\\tiny\cite{izza2021explaining}} & - & - \\
             \bottomrule
        \end{tabular}
    \end{adjustbox}
    \caption{Computational complexity of semi-factual explanations.}
    \label{tab:semifactuals}
\end{table}
Note that in contrast to counterfactual explanations (see Table~\ref{tab:counterfactuals}), the problem of enumerating all possible semi-factual explanations has not been studied in the literature. Only~\cite{marques2021explanations} proves that enumerating all possible MSRs (\refeq{eq:semifactual:msr}) with a polynomial-time delay is NP-complete.
Similar to the case of counterfactual explanations (see Section~\ref{sec:complexity:cf}), Table~\ref{tab:semifactuals} shows that the computation of semi-factuals for almost all models (except linear models and decision trees) is computationally hard.

\section{New Inapproximability Results for Counterfactual Explanations}\label{sec:complexity:cf:approx}

We identify a gap in the literature regarding the inapproximability of counterfactual and semi-factual explanations, as most of the literature considers only their exact computation. An exception is a result by \cite{weng18a} showing that, unless \mbox{NP} = \mbox{P}, no polynomial time $(1-o(1))\ln n$-approximation algorithm exists for the MCR problem~(\refeq{eq:cf:mcr}) \newcontent{with $\regularization(\cdot)=\ell_1(\cdot)$} for ReLU networks where $n$ is the number of neurons. \newcontent{This means that any polynomial time algorithm for computing the closest input that will change the output can face a scenario where the computed counterfactual is at a distance at least $\ln n$ times the optimal (roughly).}

\newcontent{Focusing on counterfactual explanations,} we extend this area by introducing new inapproximability results for ReLU neural networks, additive tree models and kNN models.
Before detailing the inapproximability results, we revisit the counterfactual explanation problem as defined by~\cite{CounterfactualWachter}. We refer to the optimization problem defined in Remark~\ref{remark:wachter} as \emph{WACHTER-CFE} and define its input and output as follows. 

\newcontent{
\begin{definition}[WACHTER-CFE problem]
\label{def:CFE_Problem}
For a given regressor $\classifier(\cdot)$, an instance $\xorig$, and a requested prediction $\ycf$, the computation of a counterfactual explanation of the prediction $\classifier(\xorig)$ is phrased as the following optimization problem (equivalent to~\refeq{eq:counterfactualoptproblem:wachter}):
\begin{equation}\label{eq:wachtercfe}
    \underset{\deltacf}{\arg\min}\; \loss(\classifier(\xorig \oplus \deltacf), \ycf) + \lambda \cdot \regularization(\deltacf)
\end{equation}
where $\lambda>0$ denotes a user-specified hyperparameter for balancing between obtaining the requested prediction $\ycf$ and minimizing the cost/complexity of the counterfactual $\deltacf$.

The final counterfactual sample $\xcf$ is obtained by applying the counterfactual $\deltacf$ to the original instance $\xorig$: $\xcf = \xorig \oplus \deltacf$.
\end{definition}
}
Note that we extend the notion of $\classifier(\cdot)$ as a classifier to the more general notion of a \emph{regressor}. \newcontent{Since classification is a special case of regression, Definition~\ref{def:CFE_Problem} covers a larger spectrum of applications. Furthermore, \refeq{eq:wachtercfe} from Definition~\ref{def:CFE_Problem} can be reduced to the MCR formulation~\refeq{eq:cf:mcr} by setting $\lambda$ to a sufficiently small positive number\footnote{\newcontent{The exact value depends on the specific problem and its parameters, such as the upper bound $k$ from~\refeq{eq:cf:mcr}.}}, recovering the constrained optimization problem where the cost (i.e., amount of change) has to be minimized subject to the correctness of the counterfactual $\deltacf$. Therefore, our study of WACHTER-CFE (Definition~\ref{def:CFE_Problem}) allows us to make more general statements, implicitly including the MCR formulation~\refeq{eq:cf:mcr}.}
The inputs $\xorig$ are discrete/categorical inputs, for example, modeling yes or no features. While we could compute an exact solution to the WACHTER-CFE problem (\refdef{def:CFE_Problem}), we have already shown that this is known to be difficult in a variety of settings. Thus, an alternative would be to compute an approximate solution, i.e., building an efficient algorithm guaranteeing a solution with a value of the objective within some factor $K$ of the optimal value for some $K$ close to $1$.



In the following, we investigate the computational complexity of such approximation algorithms for the WACHTER-CFE problem (\refdef{def:CFE_Problem}), and prove that such algorithms do not exist for classic regressors with discrete input, even if $K$ is {\em exponential} in the size of the regressors. For any polynomial $p(n)$, we show that there is no polynomial time algorithm for the WACHTER-CFE problem (\refdef{def:CFE_Problem}) with approximation factor $2^{p(n)}$ even for simple neural networks, additive tree models and kNN models with discrete input under the assumption \mbox{P} $\neq$ \mbox{NP} \newcontent{($n$ refers to the size of the regressors, such as the number of neurons or number of nodes)}. 
\newcontent{More precisely, let $\deltacf$ be the optimal value according to~\refdef{def:CFE_Problem}. The presented theorems state that there does \emph{not} exist any polynomial-time algorithm that is guaranteed to compute a solution $\deltacf'$ to \refdef{def:CFE_Problem} such that:
\begin{equation}\label{eq:cfwachter:approx_hard}
     \loss(\classifier(\xorig \oplus \deltacf'), \ycf) + \lambda \cdot \regularization(\deltacf') \leq 2^{p(n)}(\loss(\classifier(\xorig \oplus \deltacf), \ycf) + \lambda \cdot \regularization(\deltacf))
\end{equation}
This means that computing approximately optimal counterfactuals is also computationally hard, and we can not hope to find an efficient (i.e., polynomial-time) approximation algorithm computing a $\deltacf'$ satisfying~\refeq{eq:cfwachter:approx_hard}.}

The presented theorems hold for regressors $\classifier(\cdot)$ taking binary input, i.e., $\setX = \{0, 1\}^\dimsym$ \newcontent{as it is usually assumed in the literature on computational complexity of counterfactual explanations~\citep{amir2024hard}.}
\newcontent{Note that practitioners often face discrete or binary/categorical features in applications, such as attrition and business analytics~\citep{ArteltG23,artelt2024supporting}. Therefore, our presented theorems directly apply to important real-world scenarios and provide insights into how well optimal counterfactual explanations can be approximated in such scenarios.}
The loss $\loss(\cdot)$ is the squared error, and the cost $\regularization(\cdot)$ of the counterfactual is a function satisfying $\max_{z} \regularization(z) \leq q(n)$ for some polynomial $q(\cdot)$. As an example, the loss $\regularization(\cdot)$ could be the $1$-norm for the binary vector $\deltacf$, and the operator $\oplus$ could be the xor operator, implying that we try to minimize the number of bits to flip in the input to get a good output from the regressor. The proofs of our results can be found in Appendix~\ref{appendix:approx:theorems}, and \newcontent{are based on a 3-SAT reduction, which constitutes a common proof strategy for proving NP-completeness of a problem~\citep{10.5555/1540612} and have also been used in some work on computational complexity of counterfactuals~\citep{DBLP:conf/ecai/MarzariLCF24}.} The formal statements of the theorems are as follows:

\begin{theorem}\label{thm:NN}
If P$\neq$NP, then the following holds for any polynomial $p(n)$: There is no polynomial time $2^{p(n)}$-approximation algorithm for the WACHTER-CFE problem (\refdef{def:CFE_Problem}) for neural networks (ReLU) with $n$ nodes and one hidden layer. 
\end{theorem}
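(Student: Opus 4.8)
The plan is to give a gap-preserving reduction from 3-SAT that exploits two features of \refdef{def:CFE_Problem}: the objective in \refeq{eq:wachtercfe} is nonnegative, and we fully control the magnitude of the loss term through the network weights. Given a 3-SAT instance $\phi$ with $m$ variables and $k$ clauses, I would build a single-hidden-layer ReLU regressor $\classifier$ on the binary domain $\setX=\{0,1\}^m$ that, up to a large scaling factor $M$, outputs the number of \emph{unsatisfied} clauses of $\phi$. Concretely, for each clause $C_j$ I would use one hidden ReLU neuron computing $\mathrm{ReLU}(1-t_j)$, where $t_j$ is the affine expression counting the satisfied literals of $C_j$ (a positive literal $x_a$ contributes $x_a$, a negative literal $\neg x_b$ contributes $1-x_b$). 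This neuron equals $0$ exactly when $C_j$ is satisfied and $1$ otherwise, so the linear output node $\classifier(\vx)=M\sum_j \mathrm{ReLU}(1-t_j)$ equals $M$ times the number of unsatisfied clauses. The whole network has $n=m+k+1$ nodes and exactly one hidden layer. I would then set $\ycf=0$, $\xorig=\vzero$, take $\oplus$ to be xor and $\regularization(\cdot)=\pnorm{\cdot}_1$ (any cost obeying the stated bound $\max_z\regularization(z)\le q(n)$ works).

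The key point is to choose the scaling weight $M$ so that the loss term dwarfs the polynomially bounded cost term, converting a $0$/positive satisfiability gap into an arbitrarily large multiplicative one. If $\phi$ is satisfiable, any satisfying assignment $\vx^\star$ yields $\classifier(\vx^\star)=0$, hence zero loss, so the optimum of \refeq{eq:wachtercfe} is at most $\lambda\,\regularization(\xorig\oplus\vx^\star)\le\lambda\,q(n)=:a$. If $\phi$ is unsatisfiable, then since xor makes every point of $\{0,1\}^m$ reachable and every assignment leaves at least one clause unsatisfied, we have $\classifier(\xcf)\ge M$ for all feasible $\xcf$, so the loss alone forces the optimum to be at least $M^2=:b$. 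Taking $M$ just large enough that $M^2>2^{p(n)}\,\lambda\,q(n)$ --- for instance $M=2^{p(n)}(\lceil\lambda q(n)\rceil+1)$, whose bit-length is polynomial in the instance size --- guarantees $b>2^{p(n)}\,a$ for every $n$ and every polynomial $p$.

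Finally I would derive the contradiction. Suppose a polynomial-time $2^{p(n)}$-approximation algorithm for WACHTER-CFE existed. Running it on the constructed instance and comparing the returned objective value against the threshold $b$ decides satisfiability: on a satisfiable instance it returns a value at most $2^{p(n)}\cdot a<b$, whereas on an unsatisfiable instance every feasible solution --- including the one returned --- has value at least $b$. This would decide 3-SAT in polynomial time, contradicting P$\neq$NP. The main obstacle, and the crux of the construction, is exactly the interference of the cost term: without scaling, a satisfiable instance requiring many bit flips could have a \emph{larger} objective than an unsatisfiable one, so the natural $0/1$ gap is useless. Amplifying the loss by the large weight $M$ is what forces the loss term to dominate and creates the required exponential separation; the remaining bookkeeping (polynomial bit-length of $M$, and verifying the one-hidden-layer node count so that $p$ is evaluated at the correct $n=m+k+1$) is routine.
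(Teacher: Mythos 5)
Your proof is correct and takes essentially the same approach as the paper's: a reduction from 3-SAT to a one-hidden-layer ReLU instance whose output is $0$ exactly on (encodings of) satisfying assignments and at least $M$ otherwise, with $M$ of polynomial bit-length chosen so that the squared-error loss term dwarfs the polynomially bounded cost term, whence a polynomial-time $2^{p(n)}$-approximation algorithm would decide 3-SAT. The only differences are cosmetic: you feed the variables directly into the network and encode negative literals as affine expressions (avoiding the paper's per-literal inputs and consistency neurons), and you conclude by thresholding the returned objective value rather than extracting a satisfying assignment from the returned counterfactual.
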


For additive tree models, we will restrict our attention to forests consisting of binary decision trees where the aggregated response is the average of all the outputs of the trees.

\begin{theorem}
\label{thm:ATM}
If P$\neq$NP, then the following holds for any polynomial $p(n)$: There is no polynomial time $2^{p(n)}$-approximation algorithm for the WACHTER-CFE problem (\refdef{def:CFE_Problem}) for additive tree models with a total of $n$ nodes. 
\end{theorem}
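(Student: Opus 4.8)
The plan is to reduce 3-SAT to the WACHTER-CFE problem (\refdef{def:CFE_Problem}) for additive tree models, engineering instances with an exponentially large gap between the satisfiable and unsatisfiable cases. Given a 3-SAT formula $\phi$ with variables $x_1,\dots,x_k$ and clauses $C_1,\dots,C_m$, I would identify the binary input $\setX=\{0,1\}^k$ with truth assignments and build one small binary decision tree $T_j$ per clause. Each $T_j$ tests the three literals of $C_j$ in turn and outputs a large penalty $M$ exactly when the clause is violated (all three literals false), and $0$ otherwise; such a tree has a constant number of nodes, so the entire forest has $n=O(m)$ nodes. Taking the additive tree model $\classifier(x)=\frac{1}{m}\sum_{j=1}^{m} T_j(x)$, we obtain $\classifier(x)=0$ iff $x$ satisfies $\phi$, and $\classifier(x)\ge M/m$ otherwise.

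I would then set $\xorig=\vzero$, $\ycf=0$, $\lambda=1$, let $\oplus$ be bitwise xor, and take $\regularization(\deltacf)=\pnorm{\deltacf}_1$, so that (using squared-error loss against $\ycf=0$ and $\xorig\oplus\deltacf=\deltacf$) the objective in \refeq{eq:wachtercfe} becomes $\classifier(\deltacf)^2+\pnorm{\deltacf}_1$. In the satisfiable case a satisfying assignment yields loss $0$ and cost at most $\max_z\regularization(z)\le q(n)$, so the optimal objective is at most $q(n)$. In the unsatisfiable case every input violates some clause, forcing loss at least $(M/m)^2$ and hence optimal objective at least $(M/m)^2$. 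The two regimes are thus separated by whatever factor we engineer through $M$.

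The crucial step is choosing $M$ so the gap exceeds $2^{p(n)}$ while keeping the reduction polynomial-time. Since the node count $n=O(m)$ is fixed by the tree \emph{structure} and is independent of the numerical leaf labels, I would first read off $n$, then set $M$ large enough that $(M/m)^2>2^{p(n)}\cdot q(n)$, for instance $M=m\cdot 2^{p(n)}\cdot(q(n)+1)$. The key observation making this sound is that although $M$ has an exponentially large \emph{value}, its bit-length is only $O(\log m+p(n))$, polynomial in $n$ and hence in the size of $\phi$, so writing it down costs polynomial time. This is the main obstacle to get right: one must keep the node count (which governs $p(n)$) separate from the magnitude of the leaf labels (which governs the gap) to avoid a circular dependency between $n$ and $M$.

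Finally, suppose toward a contradiction that a polynomial-time $2^{p(n)}$-approximation algorithm $A$ exists. Running $A$ on the constructed instance and evaluating the objective of its output $\deltacf'$ (computable in polynomial time, since $\classifier(\deltacf')$ is an average of values in $\{0,M\}$ and all arithmetic is on numbers of polynomial bit-length), the satisfiable case gives objective at most $2^{p(n)}\cdot q(n)$, whereas the unsatisfiable case gives objective at least $(M/m)^2>2^{p(n)}\cdot q(n)$. Hence $\phi$ is satisfiable iff the objective value returned is at most $2^{p(n)}\cdot q(n)$, yielding a polynomial-time decision procedure for 3-SAT and contradicting P$\neq$NP. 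The construction parallels the ReLU argument of Theorem~\ref{thm:NN}, with the clause-penalty trees playing the role of the hidden-layer neurons.
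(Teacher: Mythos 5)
Your proposal is correct and follows essentially the same route as the paper's proof: a reduction from 3-SAT with one constant-size clause-penalty tree per clause, a penalty value $M$ of polynomial bit-length chosen so that the squared-error gap between satisfiable and unsatisfiable instances exceeds $2^{p(n)}$ times the bounded cost, and the conclusion that a polynomial-time $2^{p(n)}$-approximation algorithm would decide satisfiability, contradicting P$\neq$NP. The only differences are cosmetic: the paper labels penalty leaves $cM$ so the averaged output is at least $M$ (you use $M$ and threshold at $M/m$), it keeps the cost $\regularization(\cdot)$ general rather than fixing the $\ell_1$/xor instantiation, and it decides 3-SAT by decoding a satisfying assignment from the returned counterfactual rather than by thresholding the returned objective value.
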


Finally, we have a theorem for kNN models:

\begin{theorem}
\label{thm:kNN}
If P$\neq$NP, then the following holds for any polynomial $p(n)$: There is no polynomial time $2^{p(n)}$-approximation algorithm for the WACHTER-CFE problem (\refdef{def:CFE_Problem}) for kNN models of size $n$. 
\end{theorem}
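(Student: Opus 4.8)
The plan is to prove Theorem~\ref{thm:kNN} by a gap-producing reduction from 3-SAT, mirroring the strategy used for Theorems~\ref{thm:NN} and~\ref{thm:ATM} but with a kNN-specific gadget. Fix a polynomial $p(n)$ and suppose, toward a contradiction, that a polynomial-time $2^{p(n)}$-approximation algorithm $\mathcal{A}$ for the WACHTER-CFE problem on kNN models exists. From a 3-SAT formula $\phi$ I would build, in polynomial time, a kNN model $\classifier(\cdot)$, an instance $\xorig$, a target $\ycf$, a weight $\lambda>0$, and a cost $\regularization(\cdot)$ with $\max_z \regularization(z)\le q(n)$, so that the optimal objective is \emph{small} when $\phi$ is satisfiable and \emph{exponentially large} when it is not. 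The key to defeating \emph{every} factor $2^{p(n)}$ is that the squared-error loss lets me plant an exponentially large label gap $\Delta$, while the cost term stays polynomially bounded by $\lambda\, q(n)$.

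Concretely, I would let the input bits encode a truth assignment to the variables of $\phi$ (padding with auxiliary bits as needed), set $\xorig$ to a fixed neutral point such as the all-zeros vector, and engineer the training set so that the nearest-neighbour prediction equals $\ycf$ exactly when the query encodes a satisfying assignment, and otherwise is forced to distance at least $\Delta$ from $\ycf$ (by planting training labels of magnitude $\Delta$ that dominate the prediction whenever some clause is violated). If $\phi$ is satisfiable, a satisfying assignment is reachable from $\xorig$ and attains loss $0$; since by hypothesis $\max_z\regularization(z)\le q(n)$, \emph{any} counterfactual costs at most $q(n)$, so the optimum is at most $A:=\lambda\, q(n)$, a polynomial. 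If $\phi$ is unsatisfiable, no reachable query attains prediction $\ycf$, so the loss term alone contributes at least $\Delta^2$ and the optimum is at least $B:=\Delta^2$. Choosing $\Delta := 2^{p(n)}\cdot\bigl(\lambda\, q(n)\bigr)$, which is representable with polynomially many bits, yields $B>2^{p(n)}\,A$.

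Running $\mathcal{A}$ on this instance then decides satisfiability: on a satisfiable $\phi$ it returns a value in $[\,0,\,2^{p(n)}A\,]$, whereas on an unsatisfiable $\phi$ it returns a value at least $B>2^{p(n)}A$, so comparing the returned value against the threshold $2^{p(n)}A$ separates the two cases. Since $\mathcal{A}$ runs in polynomial time and the construction is polynomial, this would place 3-SAT in P, contradicting P$\neq$NP and proving the theorem.

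I expect the main obstacle to be the construction of the kNN gadget itself. Unlike a ReLU network (Theorem~\ref{thm:NN}), which can directly compute a circuit that verifies clause satisfaction, or an additive tree ensemble (Theorem~\ref{thm:ATM}), whose trees branch explicitly on variables, a kNN model decides locally by distance and \emph{averages} neighbouring labels rather than forming the conjunction ``all clauses satisfied'' that I need. The delicate point is to place a polynomial number of training points and labels so that the presence of even a single violated clause pulls the $k$ nearest neighbours toward the far label $\Delta$, while a fully satisfying assignment is the unique situation yielding prediction $\ycf$; handling the free variables within each clause (whose violating assignments form an entire subcube, complicating the Hamming-distance comparisons) and guaranteeing that a satisfying assignment remains reachable from $\xorig$ is where the technical care will be required. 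Checking that the same gadget behaves correctly for $k>1$, and not only for $1$-NN, will require an additional but routine argument about how the $k$ nearest labels are averaged.
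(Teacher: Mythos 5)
Your high-level framework is exactly the paper's: a gap reduction from 3-SAT in which the constructed regressor outputs $0$ on encodings of satisfying assignments and an exponentially large value $M$ otherwise, so that any polynomial-time $2^{p(n)}$-approximation algorithm would separate satisfiable from unsatisfiable instances (the paper takes $M=\sqrt{\max_{z} \regularization(z)\cdot 2^{p(n)}}+1$, $\xorig=0$, $\ycf=0$, $\lambda=1$, and your threshold-comparison variant of the final step is a perfectly valid alternative to the paper's step of extracting a satisfying assignment from the returned counterfactual). The problem is that the one piece you explicitly defer---the kNN gadget itself---is the entire content of Theorem~\ref{thm:kNN}, and your proposal does not contain it. You correctly identify the obstacles (violating assignments of a clause form a whole subcube; the prediction is an \emph{average} of $k$ labels, so a single planted label $\Delta$ may only move the output by $\Delta/k$; behaviour for $k>1$), but identifying them is not resolving them: without a concrete placement of points, labels, and a choice of $k$ for which one can prove ``prediction $=\ycf$ iff the query satisfies all clauses, else prediction at least $M$ away,'' the unsatisfiable-case lower bound $B\geq\Delta^2$ that your gap argument needs is unsupported. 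This is a genuine gap.

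For comparison, the paper's construction resolves precisely these issues by exploiting the averaging rather than fighting it. For each of the $c$ clauses it introduces $8$ points, one for each assignment of that clause's three variables; the unique violating assignment gets label $cM$ and the other seven get label $0$. The model is a kNN regressor with $k=c$ that outputs the average of the labels of the $c$ nearest points. For any binary query, the nearest point within each clause's group of $8$ is the one agreeing with the query on that clause's three variables, so the $c$ nearest neighbours overall consist of exactly one such agreeing point per clause, and the prediction equals $M$ times the number of violated clauses. The scaling of the labels by $c$ cancels the division by $k=c$ in the average---this is exactly the answer to your worry about how the $k$ nearest labels are averaged---and grouping points per clause makes the subcube issue disappear, since only the query's restriction to the clause's three variables determines which point of that group is nearest. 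With this gadget, the paper's Observation~\ref{obs_polynomial} and Observation~\ref{obs_output} hold verbatim, and the rest of the argument is word-for-word the proof of Theorem~\ref{thm:NN}, i.e., the skeleton you already have.
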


%
%

\section{Discussion of Open Questions \& Implications for the XAI Community} \label{sec:discussion}
Computational complexity analysis provides a critical view of the difficulty of computing various types of counterfactual and semi-factual explanations. This analysis can help users of XAI methods towards models that best suit the kinds of explanations they require. We now discuss the wider implications of computational complexity analysis, in the hopes of spurring more interest within the XAI community to analyze the complexity of existing and new methods.


We propose that computational complexity should be taken into consideration when proposing new explanation methods. While a new explanation type/formalization might have some appealing properties, it is also important to make sure that those explanations can be computed efficiently. Furthermore, it ought to be clearly understood which parameters (e.g., dimensionality, model-size, etc.) constitute the computational bottleneck.

As highlighted in this work, many counterfactual and semi-factual explanations are ``hard'' to compute. This might question their usefulness. That is, we must ask whether it makes sense to ask for such explanations if they are NP-hard/complete to compute, and also difficult to approximate.
We argue that it might be reasonable to move away from the idea of computing the ``best'' (i.e., closest or optimal) counterfactual/semi-factual, and instead aim for some counterfactual/semi-factual with reasonable properties such that it still remains useful in practice. Indeed, existing work on computing counterfactual/semi-factual explanations often \newcontent{utilizes heuristics that} end up in local minima, yet those explanations often turn out to be useful~\citep{smyth2022few}.
\newcontent{In this context, we propose that future work should focus more on average-case performance analysis, as well as on less strict formulations that allow non-optimal solutions, such as Definition~\ref{def:CFE_Problem}. This would better align with practice and bridge the gap between theoretical worst-case analyses and the scenarios in the real world.}

\newcontent{
Computational complexity analysis usually assumes categorical input features~\cite{amir2024hard}. Although categorical features occur quite often in real-world scenarios, such as attrition and business analytics~\citep{ArteltG23,artelt2024supporting}, there also exist scenarios with continuous features. In this context, the presented findings and complexity results do provide valuable insights for many real-world scenarios involving categorical features. In particular, they state that in those scenarios, even with popular state-of-the-art methods such as evolutionary methods (e.g, DiCE)~\citep{mothilal2020explaining,Dandl_Molnar_Binder_Bischl_2020}, we can not expect to get an optimal counterfactual/semi-factual explanation or even a ``good'' approximation.
However, the presented findings do not provide statements for cases with continuous features, leaving them as a largely unexplored area from a computational complexity perspective. Existing work on continuous features often proves optimality by proving convexity of the optimization problem to be solved or using convex approximations to prove approximate optimality~\citep{PlausibleCounterfactuals}. However, formal statements for general non-convex scenarios remain an open problem, with the consequence that for many gradient-based methods, it remains unclear how optimal the generated explanations are.}

A major question is whether it is acceptable to use classifiers/regressors for which the computation of (certain) explanations is ``infeasible''.
We argue that this likely depends on the application domain and the importance of ``optimal/best'' counterfactual/semi-factual explanations. While in some applications a sufficiently good explanation will suffice (as discussed in the previous paragraph), other applications might require exact ones, with the consequence that not every classifier type can be used, given the computational hardness of computing such exact/optimal counterfactual/semi-factual explanations. \newcontent{In this context, our comprehensive overview of the computational complexity for different types of models and counterfactuals/semi-factuals can help decision makers in deciding whether it is appropriate to use counterfactual/semi-factual explanations in a particular application or not.}

\section{Conclusion}

This work addressed the computational complexity of generating explanations for a variety of models and types of counter/semi-factual explanations. Our overview of the current state-of-the-art shows that there remain many open questions and thus opportunities for advancing current knowledge about the complexity of different explanations.
\newcontent{We observed that in many settings, the computation of counterfactual and semi-factual explanations is computationally hard. We extended those findings further by showing that we cannot even efficiently compute counterfactuals with an exponential approximation factor.}

Most existing work on explanations considers ``plain'' counterfactual and semi-factual explanations, which are known to miss/lack other important properties such as plausibility, robustness, etc. More research on the complexity of these extended definitions is needed, as outlined in the missing entries in Tables~\ref{tab:counterfactuals} and~\ref{tab:semifactuals}. \newcontent{In this context, it is worth noting that there exist numerous different formulations of robustness and plausibility, leaving a large field open for future research.} \newcontent{Furthermore, the absence of any complexity results on causal counterfactual explanations is alarming given the fact that counterfactuals are often deployed in the real world, where actionability and causality are key requirements.}

We acknowledge an important limitation of our work, namely that just because a particular type of explanation is shown to be computationally difficult, this does not mean that these types of explanations are useless or impractical in practice. Heuristic or approximation methods could be used in place of optimal approaches, but the compatibility of these methods with industrial requirements and regulatory schemes must be carefully considered. For future work, investigating the quality of heuristic and approximation schemes from a formal perspective could provide critical guidance for industry and governments for using and regulating XAI systems. \newcontent{In this context, there is also an urgent need for formal statements on the optimality of existing and commonly used heuristic and approximation methods for cases with continuous features, given that those cases have been mostly ignored in computational complexity analysis so far.
It would also be interesting to move away from worse-case analyses, towards average-case analyses, considering the WACHTER-CFE formulation instead of the strict MCR formulation for allowing non-optimal solutions, as is usually done in practice.
Finally, the limited work on semi-factual explanations (compared to counterfactual explanations) is alarming and needs more attention from the community. Not only concerning their computational complexity, but also in general, such as evaluation methods, comparison of different formalization and modeling approaches, as well as algorithms for efficiently computing semi-factual explanations. In particular, there is an urgent need for (in-)approximability studies of semi-factuals, similar to what we did in Section~\ref{sec:complexity:cf:approx}.
However, their difference from counterfactual explanations, in particular the absence of the contrasting property, hinders the transferability of existing results for counterfactuals to semi-factuals.}

\subsubsection*{Acknowledgments}
This research was supported by the Ministry of Culture and Science NRW (Germany) as part of the Lamarr Fellow Network. This publication reflects the views of the authors only.

\bibliography{referencer}
\bibliographystyle{tmlr}

\appendix
\section{Appendix}
\subsection{Counterfactual Explanations}
\subsubsection{Ensembles of ReLU Networks}
\begin{corollary}[]\label{corollary:ensemble:relunet:robust}
The computation of robust counterfactual explanations (as defined in~\cite{DBLP:conf/ecai/MarzariLCF24}) for an ensemble of ReLU networks is NP-hard.
\end{corollary}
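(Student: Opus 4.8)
The plan is to prove hardness not by constructing a fresh reduction but by displaying the target problem as a generalization of one already shown to be hard in Section~\ref{sec:complexity:cf}. Two facts are available from Table~\ref{tab:counterfactuals}: computing a \emph{robust} counterfactual (\refeq{eq:counterfactualoptproblem_robust}) for a \emph{single} ReLU network is NP-hard (\cite{DBLP:conf/ecai/MarzariLCF24}), and computing a \emph{classic} counterfactual (\refeq{eq:counterfactualoptproblem}) for an \emph{ensemble} of ReLU networks is NP-complete (\cite{bassan2025on}). Either can serve as the hard source; I would lead with the former, since the corollary adopts exactly the robustness notion of \cite{DBLP:conf/ecai/MarzariLCF24}.

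The core observation is that a single ReLU network is the degenerate case of an ensemble of ReLU networks, namely an ensemble with a single member whose aggregate is just that member's output. First I would make this embedding explicit: given any instance of the single-network robust problem, consisting of a ReLU network $\classifier$, an instance $\xorig$, a target $\ycf$, and a set $\bigtriangleup\set{H}(\classifier)$ of plausible model variations, I map it to the ensemble robust problem on the one-member ensemble realizing $\classifier$, with the identical variation set and identical cost $\regularization(\cdot)$. This map is computable in polynomial (indeed constant) time and preserves both the feasible region and the objective value of~\refeq{eq:counterfactualoptproblem_robust} exactly, so it is a valid reduction in the sense of Section~\ref{sec:foundations:complexity}. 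Since the single-network robust problem is NP-hard and reduces to the ensemble robust problem, transitivity of $\leq$ gives that the latter is NP-hard.

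I expect the only delicate point --- the ``main obstacle'' for what is otherwise essentially a one-line argument --- to be definitional rather than combinatorial: confirming that representing a single network as a one-member ensemble is admissible under the formalization of ensembles used here, and that the variation set $\bigtriangleup\set{H}(\cdot)$ transfers unchanged. If a one-member ensemble is disallowed, I would instead duplicate $\classifier$ into a two-member averaging ensemble, which computes the same function, or fall back on the second route entirely: the classic problem~\refeq{eq:counterfactualoptproblem} is recovered from~\refeq{eq:counterfactualoptproblem_robust} by taking the singleton variation set $\bigtriangleup\set{H}(\classifier)=\{\classifier\}$, so robust CFE for ensembles specializes to classic CFE for ensembles, which is already NP-complete by \cite{bassan2025on}; this yields NP-hardness as well.
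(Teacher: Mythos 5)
Your proposal is correct, and its core idea --- transferring the single-network hardness result of \cite{DBLP:conf/ecai/MarzariLCF24} across the single-network/ensemble correspondence --- is the same as the paper's. The noteworthy difference is the \emph{orientation} of that correspondence. The paper's proof is the one-liner ``an ensemble of MLPs can be written as a single MLP, hence the statement follows from \cite{DBLP:conf/ecai/MarzariLCF24}''; read literally as a reduction, this maps ensemble instances to single-network instances, which is the direction one would use to show the ensemble problem is \emph{no harder} than the single-network one (an upper bound / membership-style argument), not that it is NP-hard. Hardness requires exactly the embedding you make explicit: hard instances for a single ReLU network are themselves instances of the ensemble problem, via a one-member (or duplicated, averaging two-member) ensemble, with the variation set $\bigtriangleup\set{H}(\classifier)$ and the cost $\regularization(\cdot)$ carried over verbatim, so that feasible sets and objective values coincide. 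The paper's conclusion survives because for ReLU networks the correspondence is trivial in both directions, but your write-up supplies the logically required direction that the paper leaves implicit, and it correctly isolates the only genuine content of the argument --- that the formalization of ensembles and of $\bigtriangleup\set{H}$ tolerates this degenerate instantiation. Your fallback route (specializing~\refeq{eq:counterfactualoptproblem_robust} to the singleton variation set $\bigtriangleup\set{H}(\classifier)=\{\classifier\}$ and invoking the NP-completeness of classic ensemble CFE from \cite{bassan2025on}) is a genuinely different hardness source that the paper does not use; it is valid provided the robustness formalization admits a trivial variation set --- the same definitional caveat you already flag --- and has the minor advantage of not depending on \cite{DBLP:conf/ecai/MarzariLCF24} at all.
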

\begin{proof}
    An ensemble of MLPs can be written as a single MLP -- in particular for ReLU networks. Consequently, the statement follows from~\citep{DBLP:conf/ecai/MarzariLCF24}.
\end{proof}

\begin{corollary}[]\label{corollary:ensemble:relunet:plausible}
The computation of plausible counterfactual explanations (as defined in~\citep{amir2024hard}) for an ensemble of ReLU networks is NP-complete.
\end{corollary}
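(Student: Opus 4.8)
The plan is to mirror the one-line argument used for Corollary~\ref{corollary:ensemble:relunet:robust} and to obtain both directions of NP-completeness directly from the single-network case, without constructing a fresh reduction. For \textbf{NP-hardness}, I would observe that a single ReLU network is nothing but the special case of an ensemble containing exactly one member together with a trivial (identity) aggregation rule. Since computing plausible counterfactual explanations~\refeq{eq:cf:mcr_plausible} for a single ReLU network is already NP-complete—and hence NP-hard—by~\cite{amir2024hard} (see the corresponding entry of Table~\ref{tab:counterfactuals}), the same lower bound is inherited verbatim by the strictly more general ensemble problem.

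For \textbf{membership in NP}, I would invoke the standard fact, already used in Corollary~\ref{corollary:ensemble:relunet:robust}, that an ensemble of ReLU MLPs can be rewritten as a single ReLU MLP of polynomial size: place the constituent networks in parallel on a shared input and append one affine readout layer implementing the aggregation (averaging or a weighted sum), which is a linear map and thus expressible exactly inside a ReLU network. This transformation runs in polynomial time and preserves the input--output behaviour of the ensemble, so it maps an instance of the plausible-CFE problem for the ensemble to an \emph{equivalent} instance for a single ReLU network. Because the single-network problem lies in NP by~\cite{amir2024hard} and the transformation is polynomial, the ensemble problem is in NP as well; combining this with the hardness direction yields NP-completeness. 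A point I would make explicit is that the plausibility constraint survives the rewriting: under the worst-case assumption of the table the context indicator $\pi(\cdot)$ is itself a ReLU network, and since $\pi$ is evaluated on the counterfactual sample $\xorig \oplus \deltacf$ rather than on the ensemble's internals, it is left untouched when the ensemble is collapsed, so the combined instance stays of exactly the form handled in~\cite{amir2024hard}.

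The hard part will not be the hardness direction, which is immediate, but verifying that the aggregation rule is \emph{faithfully and exactly} representable as a shallow ReLU readout. For averaging or weighted sums this is trivial, but for rules such as majority voting or $\argmax$ one must confirm that the required thresholding can be encoded with a constant-depth ReLU gadget without altering the predicted class, so that the equivalence between the ensemble instance and the single-network instance is exact rather than merely approximate. Once this representability is established, the special-case observation and the polynomial collapse combine to give NP-completeness.
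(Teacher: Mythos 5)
Your proposal is correct and follows essentially the same route as the paper's own proof, which simply observes that an ensemble of ReLU MLPs can be rewritten as a single ReLU MLP and then invokes the NP-completeness result of \citep{amir2024hard}. Your version is more explicit than the paper's one-line argument---separating the hardness direction (a single network is an ensemble of size one) from membership (the polynomial-size collapse, with the context indicator $\pi(\cdot)$ untouched) and flagging that non-linear aggregation rules need a ReLU gadget---but these are elaborations of the same underlying idea, not a different approach.
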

\begin{proof}
    An ensemble of MLPs can be written as a single MLP -- in particular for ReLU networks. Consequently, the statement follows from~\citep{amir2024hard}.
\end{proof}

\subsection{Proofs of Theorem~\ref{thm:NN}, \ref{thm:ATM} and \ref{thm:kNN}}\label{appendix:approx:theorems}

For our three proofs, we will use a reduction from the famous 3-SAT problem that is NP-complete~\citep{Garey}.
\newcontent{Note that reductions from 3-SAT constitute a popular proof strategy for proving NP-completeness of a problem~\citep{10.5555/1540612}. In the context, of the complexity analysis of counterfactuals and semi-factual explanations, reductions to existing problems constitute the most popular proof strategy. In particular, reductions from 3-SAT~\citep{DBLP:conf/ecai/MarzariLCF24}, SAT~\citep{ignatiev2021sat}, vertex cover~\citep{barcelo2025explaining}, and Knapsack~\citep{alfano2024even}.
In our case, we aim for a reduction from 3-SAT because it aligns well with the contrasting property of counterfactuals and has been successfully used before as a proof strategy~\citep{DBLP:conf/ecai/MarzariLCF24}.}
The input to the 3-SAT problem is a Boolean formula in conjunctive normal form (CNF) with clauses consisting of no more than three literals. The objective is to decide whether the Boolean formula is satisfiable. Actually, we are using a restricted version of 3-SAT where each clause contains exactly three distinct literals. There is a straightforward, well-known reduction showing that the restricted version is NP-complete as well.

\begin{proof}[Proof of Theorems~\ref{thm:NN}:]
As mentioned earlier, we will use reduction from the restricted 3-SAT problem. The intuition behind the proof is to turn an instance of 3-SAT into a WACHTER-CFE instance (\refdef{def:CFE_Problem}) with a regressor that outputs either $0$ or a very big number $M$ where $0$ corresponds to yes-instances of 3-SAT. If we efficiently can solve the WACHTER-CFE problem (\refdef{def:CFE_Problem}) approximately with target $\ycf=0$, then we also can solve 3-SAT efficiently, which is a contradiction. Now, assume that there is a polynomial time $2^{p(n)}$-approximation algorithm $A(\cdot)$ for the WACHTER-CFE problem (\refdef{def:CFE_Problem}).

An instance of the 3-SAT problem is transformed into the WACHTER-CFE instance $(\classifier, \xorig, \ycf, \lambda)$ where $\classifier(\cdot)$ is the neural network with one hidden layer shown in Fig.~\ref{fig:NN_reduction} with $M=\sqrt{\max_{z} \regularization(z) \cdot 2^{p(n)}}+1$ and $\xorig=0$, $\ycf=0$, $\lambda = 1$.
The regressor $\classifier(\cdot)$ has an input neuron for each literal encoded according to the boolean value of the corresponding Boolean variable. For each clause, there is a neuron in the hidden layer that will have output $0$ if the clause is satisfied and $M$ otherwise. There is also a neuron in the hidden layer for each Boolean variable that outputs $M$ if both the neurons for the literals for that variable in the input layer is $1$ and $0$ otherwise.

The number of nodes $n$ in the regressor $\classifier(\cdot)$ is polynomial in the size of the 3-SAT instance, and $\log M$ is polynomial in $n$. This leads to the following observation:
\begin{observation} 
\label{obs_polynomial}
The WACHTER-CFE instance can be built in polynomial time.
\end{observation}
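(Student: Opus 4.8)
The statement to establish is that the WACHTER-CFE instance $(\classifier, \xorig, \ycf, \lambda)$ produced by the 3-SAT reduction has description length polynomial in the size of the input formula, and that it can be emitted by a polynomial-time procedure. The plan is to treat the four components separately. Three of them — $\xorig = 0$, $\ycf = 0$, and $\lambda = 1$ — are fixed constants writable in constant time, so essentially all of the work lies in bounding the size of the regressor $\classifier(\cdot)$ and the time needed to write it down.

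First I would count the nodes and edges of the one-hidden-layer ReLU network described in the construction. The input layer has two neurons per Boolean variable (one per literal); the hidden layer has one neuron per clause (outputting $0$ or $M$) together with one consistency neuron per variable; and there is a single output neuron. With $m$ clauses and $v$ variables this gives $O(m+v)$ neurons and $O(m+v)$ weighted edges, which is linear — hence polynomial — in the size of the 3-SAT instance. An important point worth stating explicitly is that the \emph{topology} of $\classifier(\cdot)$, and therefore the node count $n$, is determined by the formula alone and is independent of the numerical weight value $M$; this fixes $n$ before $M$ is ever chosen and so avoids any circularity in the definition of $M$.

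The only genuinely delicate point, and the step I expect to be the main obstacle, is controlling the magnitude of the large constant $M = \sqrt{\max_{z}\regularization(z)\cdot 2^{p(n)}}+1$ that appears as the output scaling on the clause and consistency neurons. As a \emph{number}, $M$ is exponentially large (on the order of $2^{p(n)/2}$), so a naive reading might suggest that merely writing it down costs exponential time. The resolution is to bound its \emph{bit-length} rather than its value: using the standing hypothesis $\max_{z}\regularization(z)\le q(n)$ for a polynomial $q(\cdot)$, I would show $\log M \le \tfrac{1}{2}\bigl(\log q(n) + p(n)\bigr) + O(1)$, which is polynomial in $n$. Hence $M$ admits a binary representation of polynomially many bits and can be computed and written in polynomial time, while every other weight is a fixed $O(1)$-bit constant encoding the clause and consistency logic.

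Combining these observations, each of the four components of the instance has polynomial description length and is producible in polynomial time, so the full WACHTER-CFE instance is built in polynomial time, as claimed. In the write-up I would be careful to emphasize that ``polynomial time'' is measured against the bit-length of the 3-SAT input, and that the exponential \emph{value} of $M$ is harmless precisely because only its $\log M$ many bits are ever manipulated during the construction.
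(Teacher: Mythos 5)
Your proposal is correct and follows essentially the same route as the paper's own (much terser) justification: the paper likewise argues that the node count $n$ is polynomial in the size of the 3-SAT instance and that $\log M$ is polynomial in $n$, so the instance can be written down in polynomial time. Your added remarks --- that the topology fixes $n$ before $M$ is chosen, and the explicit bound $\log M \le \tfrac{1}{2}\bigl(\log q(n) + p(n)\bigr) + O(1)$ via the hypothesis $\max_z \regularization(z) \le q(n)$ --- are faithful elaborations of that same argument rather than a different approach.
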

Another key observation is the following, which is not difficult to prove:
\begin{observation}
\label{obs_output}
The output of the regressor $\classifier(\cdot)$ is either $0$ or at least $M$. If $\x$ is a vector corresponding to a satisfying assignment of the Boolean variables for the 3-SAT instance, then $\classifier(\x)=0$. If $\classifier(\x) = 0$, then we can construct a satisfying assignment of the Boolean variables for the 3-SAT instance from $\x$.
\end{observation}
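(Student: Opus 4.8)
The plan is to read the claim off the explicit gate computations of the one-hidden-layer network, treating its three assertions in turn: that $\classifier(\x)\in\{0\}\cup[M,\infty)$, that a satisfying assignment maps to output $0$, and that output $0$ yields a satisfying assignment.

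First I would make the informal gadget description precise. Write $z_\ell\in\{0,1\}$ for the input neuron of literal $\ell$. For a clause $C=(\ell_1\vee\ell_2\vee\ell_3)$ I would have its hidden neuron compute $\mathrm{ReLU}\!\left(M\,(1-z_{\ell_1}-z_{\ell_2}-z_{\ell_3})\right)$; checking the four cases $z_{\ell_1}+z_{\ell_2}+z_{\ell_3}\in\{0,1,2,3\}$ shows this equals $M$ exactly when all three literal inputs are $0$ (the clause is unsatisfied) and $0$ otherwise. For a variable $v$ with literal inputs $z_v,z_{\neg v}$ I would have its consistency neuron compute $\mathrm{ReLU}\!\left(M\,(z_v+z_{\neg v}-1)\right)$, which is $M$ precisely when $z_v=z_{\neg v}=1$ and $0$ otherwise. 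The output neuron sums all hidden neurons, so $\classifier(\x)=M\cdot k$ where $k$ is the number of unsatisfied clauses plus the number of inconsistent variables. As $k$ is a nonnegative integer, $\classifier(\x)$ is $0$ when $k=0$ and at least $M$ when $k\geq1$, giving the first assertion.

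For the second assertion, a satisfying truth assignment induces inputs with exactly one of $z_v,z_{\neg v}$ equal to $1$ per variable (so every consistency neuron outputs $0$) and at least one true literal per clause (so every clause neuron outputs $0$); hence $k=0$ and $\classifier(\x)=0$. For the converse I would use $\classifier(\x)=0\Rightarrow k=0$, which means every clause has some literal input equal to $1$ and no variable has both literal inputs equal to $1$. I would then read off the \emph{partial} assignment ``$v$ true if $z_v=1$, $v$ false if $z_{\neg v}=1$'', well defined by the consistency constraint, and extend it to a full assignment by fixing the remaining ``free'' variables (those with $z_v=z_{\neg v}=0$) arbitrarily.

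The one non-routine point, and the step I would treat most carefully, is verifying that this extended assignment still satisfies every clause: a priori, fixing a free variable might falsify a clause. I would resolve this by observing that each clause already carries a literal whose input is $1$, and that literal belongs to a \emph{determined} variable, so the clause is satisfied regardless of how the free variables are fixed. This yields a genuine satisfying assignment and closes the third assertion. The remaining bookkeeping --- $2m$ input neurons for $m$ variables, one hidden neuron per clause and per variable --- is immediate and is exactly what underlies Observation~\ref{obs_polynomial}.
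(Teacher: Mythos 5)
Your proof is correct and takes essentially the same approach as the paper, which describes exactly this construction (clause neurons outputting $M$ on unsatisfied clauses, consistency neurons outputting $M$ when both literals of a variable are set to $1$, output equal to the sum) but leaves the verification implicit as ``not difficult to prove.'' Your explicit ReLU realizations of the gates and your careful handling of the free variables --- extending the partial assignment arbitrarily, which is safe because every clause is already satisfied by a literal of a determined variable --- supply precisely the details the paper omits.
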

\begin{figure}
  \centering 
  \includegraphics[scale=.85]{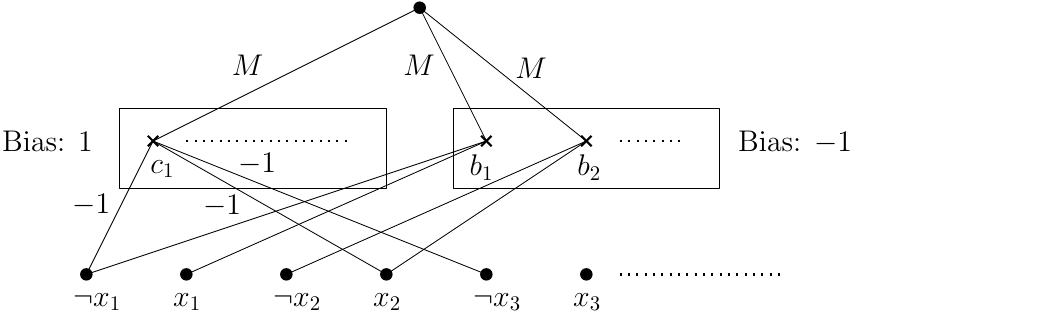}
  \caption{A neural network with one hidden layer used as the regressor $\classifier(\cdot)$ in the reduction from the 3-SAT problem to the WACHTER-CFE problem (\refdef{def:CFE_Problem}). The clause $\neg x_1 \vee x_2 \vee \neg x_3$ is a clause in the CNF formula defining the 3-SAT instance. Connections with no weight shown in the figure have weight $1$.}
  \label{fig:NN_reduction}
\end{figure}

Let $\xcf=A(\classifier, \xorig, \ycf, \lambda)$ be the output of the approximation algorithm $A(\cdot)$ for computing a counterfactual explanation.
Now, assume that the 3-SAT instance is a yes-instance. Let $\x_{3SAT}$ be the vector for an assignment of the Boolean variables satisfying the 3-SAT instance. Based on Observation~\ref{obs_output}, we now have $\classifier(\x_{3SAT})=0$ implying the following inequality where $f(\cdot)$ denotes the WACHTER-CFE objective and $OPT_{CFE}$ denotes an optimal value for $f(\cdot)$:
\begin{equation}
OPT_{CFE} \leq f(\x_{3SAT}) \leq \max_{z} \regularization(z) \enspace
\end{equation}
The algorithm $A(\cdot)$ is a $2^{p(n)}$-approximation algorithm:
\begin{equation}
f(\xcf) \leq 2^{p(n)} \cdot OPT_{CFE}  \enspace
\end{equation}
We now get
\begin{equation}
f(\xcf) \leq 2^{p(n)} \cdot\max_{z} \regularization(z) = (M-1)^2 \enspace
\end{equation}
From Observation~\ref{obs_output}, we see that this inequality can only hold if $h(\xcf)=0$ and that we can build a satisfying assignment for the 3-SAT instance from $\xcf$. In other words, the algorithm $A(\cdot)$ can be used to compute an assignment of the boolean variables satisfying the 3-SAT instance if such an assignment exists contradicting \mbox{NP}$\neq$\mbox{P} (here we use Observation~\ref{obs_polynomial}).
\end{proof}

\begin{figure}
  \centering 
  \includegraphics[scale=.75]{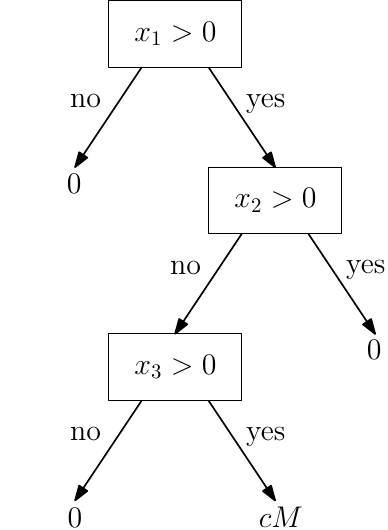}
  \caption{For each clause in the 3-SAT instance, we have a decision tree in the additive tree model producing a high output if the clause is not satisfied. The figure shows a decision tree for the example clause $\neg x_1 \vee x_2 \vee \neg x_3$.}
  \label{fig:ATM_reduction}
\end{figure}

\begin{proof}[Proof of Theorem~\ref{thm:ATM}:]
We use the same strategy as in the proof of Theorem~\ref{thm:NN}. We construct a simple additive tree model with a similar functionality as the neural network.

We build a binary decision tree for each clause that will have output $0$ if the clause is satisfied and output $cM$ otherwise where $c$ is the number of clauses. An example of such a tree is shown in Fig~\ref{fig:ATM_reduction}. Observation~\ref{obs_polynomial} and Observation~\ref{obs_output} also hold for the additive tree model reduction, which concludes the proof.
\end{proof}

\begin{figure}
  \centering 
  \includegraphics[scale=.75]{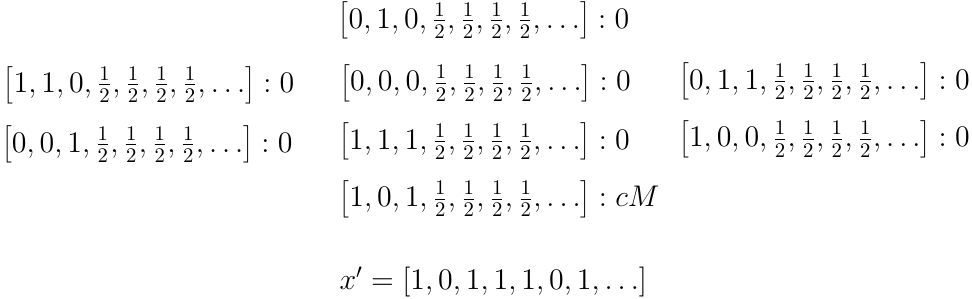}
  \caption{The figure shows the $8$ vectors in the kNN-regressor for the clause $\neg x_1 \vee x_2 \vee \neg x_3$ with labels after the colon symbol. The input vector $x'$ does not satisfy the clause, so it is closest to the vector with label $cM$.}
  \label{fig:kNN_reduction}
\end{figure}

\begin{proof}[Proof of Theorem~\ref{thm:kNN}:]
For a 3-SAT instance, we construct a simple kNN model with a similar functionality as the regressors in the previous two reductions. To be more specific, we build a kNN-regressor consisting of $8c$ vectors ($c$ is the number of clauses). The output of the regressor is the average of the labels of the $c$ vectors that are closest to the input with respect to the Euclidean distance.

For each clause, we construct $8$ vectors corresponding to all the $8$ possible assignments for the $3$ boolean variables appearing in the clause. One of the vectors corresponds to the case where the clause is not satisfied, and this vector has the label $cM$, while the other $7$ vectors have the label $0$. The $8$ vectors for an example clause are shown in Fig.~\ref{fig:kNN_reduction}.

The input vector $x'$ is not satisfying the clause {\em if and only if} the vector with label $cM$ is closest to $x'$, implying that Observation~\ref{obs_output} holds. We also notice that Observation~\ref{obs_polynomial} holds, which concludes the proof.
\end{proof}

\end{document}